\newtheorem{theorem}{Theorem}
\newtheorem{lemma}{Lemma}
\newif\ifcompile
\DeclareMathOperator*{\argmin}{arg\,min}
\newcommand{\vf}[1]{{\bm{#1}}}
\newcommand{\mf}[1]{{\mathbf{#1}}}
\newcommand{\bgamma}{{\bm\gamma}}
\title{\LARGE \bf
A Convex Formulation of Material Points and Rigid Bodies with GPU-Accelerated Async-Coupling for Interactive Simulation}
\author{Chang Yu$^{*1}$, Wenxin Du$^{*1}$, Zeshun Zong$^{1}$, Alejandro Castro$^{2}$, Chenfanfu Jiang$^{1}$, Xuchen Han$^{2}$
\thanks{* equal contribution.}
\thanks{$^{1}${\tt\footnotesize \{changyu1,wenxindu,zeshunzong,cffjiang\}@ucla.edu}, AIVC Laboratory, UCLA, USA.}
\thanks{$^{2}${\tt\footnotesize \{alejandro.castro,xuchen.han\}@tri.global}, Toyota Research Institute, USA.}
}
\begin{document}

\maketitle
\thispagestyle{empty}
\pagestyle{empty}

\begin{abstract}
We present a novel convex formulation that weakly couples the Material Point Method (MPM) with rigid body dynamics through frictional contact, optimized for efficient GPU parallelization. Our approach features an asynchronous time-splitting scheme to integrate MPM and rigid body dynamics under different time step sizes. We develop a globally convergent quasi-Newton solver tailored for massive parallelization, achieving up to 500× speedup over previous convex formulations without sacrificing stability. Our method enables interactive-rate simulations of robotic manipulation tasks with diverse deformable objects including granular materials and cloth, with strong convergence guarantees. We detail key implementation strategies to maximize performance and validate our approach through rigorous experiments, demonstrating superior speed, accuracy, and stability compared to state-of-the-art MPM simulators for robotics. We make our method available in the open-source robotics toolkit, Drake.
\end{abstract}
\section{INTRODUCTION}

With the rise of robotics foundation models \cite{bib:firoozi2023foundation}, simulation has become an indispensable tool for both policy training and evaluation~\cite{bib:choi2021use, bib:zhang2024vlabench}. Simulations enable generating large-scale training data at a fraction of the cost of real-world collection and facilitate evaluating model checkpoints in controlled environments, ensuring repeatability. In these settings, modern physics simulators for robotics must satisfy several key requirements to be effective: (i) strong numerical stability guarantees, (ii) sufficient accuracy to capture real-world success and failure modes, (iii) computational efficiency to support data collection and policy evaluation at scale, and (iv) the ability to model diverse environments.

Among existing physics simulators for robotics, Drake \cite{bib:drake} and MuJoCo \cite{bib:mujoco} are well-regarded for their strong robustness guarantees and high accuracy required for manipulation tasks. Both employ a convex formulation of frictional contact, ensuring stability and global convergence \cite{bib:todorov2014, bib:castro2022unconstrained}. However, despite recent advancements in deformable body simulation within the convex framework \cite{bib:mujoco3, bib:han2023convex, bib:zong2024convex}, these simulators struggle to efficiently simulate deformable bodies with a large number of degrees of freedom (DoFs) at interactive rates. This limitation arises partly from the need to solve poorly-conditioned convex optimization problems that necessitates direct linear solvers with $O(n^3)$ complexity \cite{bib:castro2022unconstrained}. The inclusion of deformable bodies exacerbates this challenge, as it significantly increases the number of DoFs—often by orders of magnitude—making efficient solutions increasingly difficult. Moreover, the inherently serial nature of direct linear solvers precludes efficient parallelization on GPUs, further restricting scalability.

Among various methods for simulating deformable materials, the Material Point Method (MPM) has gained traction due to its ability to handle large deformations and topology changes \cite{bib:gu2023maniskill2}. However, existing simulators struggle to achieve both robustness and efficiency \cite{bib:zong2024convex}. In this paper, we propose a novel convex formulation for coupling MPM with rigid bodies through frictional contact, designed for efficient GPU parallelization. We prove the stability and global convergence of our frictional contact model. Furthermore, we demonstrate that our method efficiently simulates a wide range of materials while preserving the high accuracy required for robotic manipulation tasks.

\begin{figure}[t]
\centerline{\includegraphics[width=1.0\columnwidth]{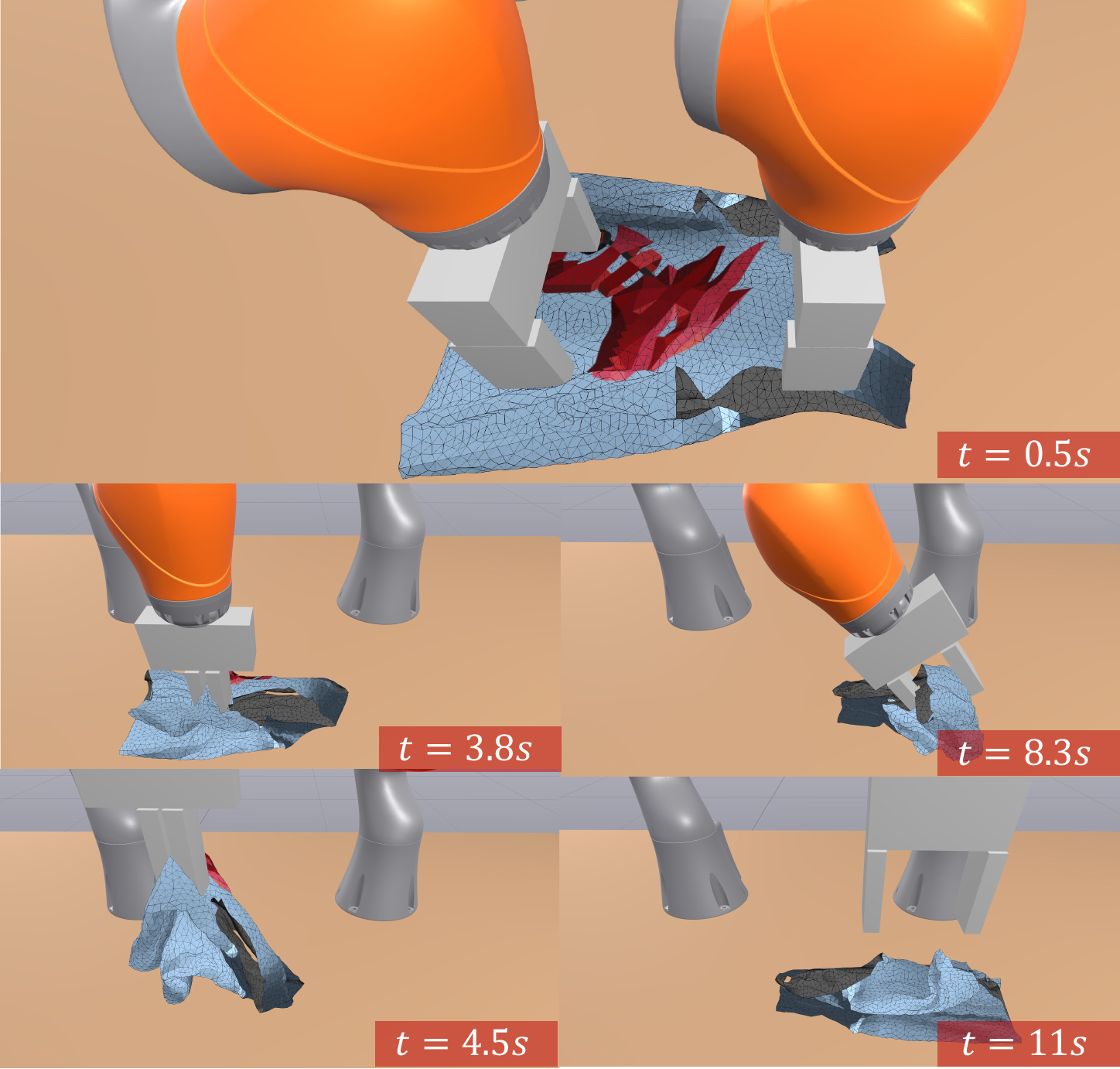}}
\caption{A challenging T-shirt folding task demonstrating the accuracy, speed, and robustness of our method. Two robotic arms fold and unfold the T-shirt while our approach accurately captures frictional interactions, effectively handling MPM-based cloth self-collisions and rigid-MPM coupling without penetration. See the supplemental video for details.}
\label{fig:dual_arm_folding}
\end{figure}
\section{PREVIOUS WORK}

\subsection{Physics-based Deformable Body Simulation}
Physics-based deformable body simulation has been a fundamental topic in computer graphics, computational physics, and engineering for decades, with applications ranging from material design to visual effects and virtual surgery. The Finite Element Method (FEM) has been widely used for simulating elastic and plastic deformations due to its accuracy in solving continuum mechanics problems \cite{bib:sifakis2012fem, bib:sofa2012}. However, FEM often requires high-quality meshes \cite{bib:madier2023introduction} and struggles with large deformations and topological changes \cite{bib:o1999graphical}. Additionally, handling contact among deformable bodies, including collision detection, requires significant computational effort \cite{bib:li2020ipc, bib:li2021cipc}.

To overcome these limitations, MPM has emerged as a powerful alternative due to its ability to handle large deformations, topological changes, and naturally resolve collision responses \cite{bib:sulsky1994particle, bib:jiang2016material}. It has been widely applied in simulations of granular materials \cite{bib:klar2016drucker}, volumetric and codimensional deformable bodies \cite{bib:jiang2017anisotropic, bib:han2019hybrid}, and fluid-solid coupling \cite{bib:gao2018animating}. Explicit integration in MPM is well-suited for massive parallelization on GPUs \cite{bib:fei2021principles, bib:gao2018gpu}. However, implicit integration remains computationally expensive \cite{bib:wang2020hierarchical}, especially when coupling MPM with rigid bodies for frictional contact \cite{bib:zong2024convex}. This coupling results in a large, often poorly-conditioned convex optimization problem, with no existing method efficiently solving it at interactive rates. To address this challenge, researchers have proposed asynchronous timestepping schemes, integrating MPM explicitly with smaller time steps to exploit massive parallelization while solving the remaining system at larger time steps \cite{bib:li2024dynamic, bib:gu2023maniskill2}. Our approach follows a similar asynchronous strategy but, to our knowledge, is the first to formulate the coupling between MPM and rigid bodies as an implicit convex optimization problem with a unique solution and global convergence guarantees, ensuring numerical stability.

\subsection{Frictional Contact}
Contact modeling in robotics is a broad topic beyond the scope of this paper; we refer readers to \cite{bib:le2024contact} for a comprehensive overview. Here, we focus on frictional contact involving deformable bodies, which is central to our work.

Many simulators, such as SOFA \cite{bib:sofa2012} and PhysX \cite{bib:macklin2019small}, formulate frictional contact as a linear complementarity problem (LCP) and solve it using projected Gauss-Seidel (PGS). Others address the nonlinear complementarity problem (NCP) directly with methods such as PGS \cite{bib:bullet}, non-smooth Newton solvers \cite{bib:macklin2019nonsmooth}, interior point methods \cite{bib:howell2022dojo}, or the alternating direction method of multipliers (ADMM) \cite{bib:menager2025differentiable}. While these approaches can model hard contact, they lack convergence guarantees, which can lead to robustness issues in practice. Incremental Potential Contact (IPC) \cite{bib:li2020ipc} offers strong non-penetration guarantees and demonstrates robust performance in practice \cite{bib:huang2021defgraspsim}. However, IPC loses convergence guarantees when friction is present, and its computational cost makes it less suitable for interactive simulations.

A different approach embraces compliant contact, which was first popularized in robotics by MuJoCo \cite{bib:todorov2014}. With a focus on physical accuracy, Drake employs introduces convex approximations of engineering-grade contact models in \cite{bib:castro2023theory}, later extended to support frictional contact between FEM and rigid bodies \cite{bib:han2023convex}. Moreover, Drake employs hydroelastic contact \cite{bib:elandt2019pressure, bib:masterjohn2022velocity, bib:castro2022unconstrained} to model continuous contact patches. More recently, Zong et al. \cite{bib:zong2024convex} extended this approach to couple MPM with rigid bodies, ensuring robustness and global convergence. However, this method is inherently serial and remains far from real-time performance. In this paper, we build upon \cite{bib:zong2024convex} and propose a GPU-accelerated approach that achieves interactive rates without compromising global convergence guarantees.

\subsection{Hardware Acceleration}
Researchers have explored two primary approaches to accelerating physics simulation using modern vectorized hardware like GPUs and TPUs. The first focuses on speeding up single large-scale simulations, enabling scenes with up to hundreds of millions of DoFs \cite{bib:hu2019taichi}. Significant progress has been made in accelerating FEM \cite{bib:xian2019scalable}, MPM \cite{bib:fei2021principles, bib:gao2018gpu, bib:wang2020massively}, and Discrete Element Method (DEM) \cite{bib:fang2021chrono}. However, some of these methods  remain far from real-time execution, while others do not support two-way coupling with rigid bodies.

The second approach emphasizes increasing simulation throughput, particularly for reinforcement learning and robotics training. Frameworks like MJX \cite{bib:mujoco3}, Isaac Gym \cite{bib:makoviychuk2021isaac}, Maniskill \cite{bib:tao2024maniskill3}, and Genesis \cite{bib:genesis} leverage hardware acceleration to run thousands of parallel simulations, improving data collection speed for policy learning.

Our GPU acceleration strategy aligns more with the first approach. However, rather than targeting massive simulations with millions of DoFs, we focus on interactive-rate simulations with thousands to tens of thousands of DoFs—expressive enough for robotic manipulation tasks. Unlike prior work prioritizing raw speed, our method emphasizes convergence and stability, ensuring reliable performance for robotics applications.
\section{OUTLINE AND NOVEL CONTRIBUTIONS}

In this work, we introduce several key contributions to advance the state of the art in efficient and robust deformable body simulation for robotics. First, we propose an asynchronous time-splitting scheme that enables integrating MPM and rigid body dynamics under significantly different time step sizes (Section \ref{sec:async}). Second, we develop a convex formulation for the weak coupling of MPM and rigid bodies through frictional contact, ensuring global convergence (Section \ref{sec:convex}). Third, we design a quasi-Newton solver that is well-suited for massive parallelization, enabling efficient execution on GPUs (Section \ref{sec:jacobi}). We detail the key implementation strategies required for high-performance execution on GPUs in Section \ref{sec:implementation}. We thoroughly validate our approach through a suite of robotic manipulation simulation scenarios and demonstrate the speed, accuracy, and robustness of our method in Section \ref{sec:results}. Finally, we release our implementation as part of the open-source robotics toolkit Drake \cite{bib:drake}, providing the robotics community with a scalable and reproducible solution for high-fidelity deformable body simulation.
\section{MATHEMATICAL FORMULATION}\label{sec:formulation}

\subsection{Asynchronous Time-Splitting Scheme}\label{sec:async}
We discretize time into intervals of size $\Delta t$ to advance the system dynamics from $t_n$ to the next time step $t_{n+1} := t_n + \Delta t$ using the governing equation:
\begin{equation}
    \mf{M}(\mf{q}) (\mf{v}^{n+1} - \mf{v}^n) = \Delta t\, \mf{k}(\mf{q}, \mf{v}) + \mf{\hat{J}}^T(\mf{q})\hat{\bgamma}(\mf{q}, \mf{v}), \label{eq:continuous}
\end{equation}
where $\mf{q}$ and $\mf{v}$ are generalized positions and velocities, $\bgamma$ is the contact and friction impulses, $\mf{k}$ represents all non-contact forces, and $\mf{\hat{J}}$ is the contact Jacobian. Using subscripts $d$ for MPM DoFs and $r$ for rigid body DoFs, we define:
\begin{align}
\mf{q}^T &= [\mf{q}_r^T,\; \mf{q}_d^T], & \mf{v}^T &= [\mf{v}_r^T,\; \mf{v}_d^T], \nonumber \\
\mf{k}^T &= [\mf{k}_r^T,\; \mf{k}_d^T], & \mf{M} &= \text{diag}(\mf{M}_r,\; \mf{M}_d).
\end{align}
Specifically, $\mf{q}_d$ and $\mf{v}_d$ denote the MPM \emph{particle} positions and \emph{grid} velocities ; $\mf{k}_d$ includes elastic-plastic and external forces on MPM DoFs; $\mf{q}_r$ and $\mf{v}_r$ are the articulated rigid body joint positions and velocities; $\mf{k}_r$ captures Coriolis terms and gravity on rigid body DoFs.

We further separate contact between MPM and rigid bodies from contact among rigid bodies as:
\[
\hat{\bgamma} = \begin{pmatrix} \bgamma_{d} \\ \bgamma_{r} \end{pmatrix}, \quad \mf{\hat{J}} = \begin{pmatrix} \mf{J}_r & \mf{J}_d \\ \mf{J}_{rr} & \mf{0} \end{pmatrix}.
\]
Notably, we do not explicitly model contact forces among MPM DoFs, as these are naturally resolved through the hybrid Eulerian/Lagrangian representation with constitutive and plasticity modeling \cite{bib:han2019hybrid, bib:jiang2017anisotropic}.  

To achieve efficient GPU parallelization without sacrificing stability, we integrate elastic forces explicitly using symplectic Euler, while treating stiff frictional contact forces implicitly via backward Euler. However, symplectic Euler requires smaller time step sizes than backward Euler for stability, motivating our asynchronous time-splitting approach:
\begin{align}
\mf{M}_r(\mf{q}_r^n)(\mf{v}_r^{n+1} - \mf{v}_r^{n}) 
    &= \Delta t\, \mf{k}_r(\mf{q}_r^n, \mf{v}_r^n) 
    + \mf{J}_{rr}^T \bgamma_{r}(\mf{v}_r^{n+1}) \nonumber \\
    &\quad {} + \mf{J}_r^T \sum_{k=0}^{N-1} \bgamma_{d}^{n,k}, \label{eq:discrete_rigid}
\end{align}
\begin{align}
\mf{M}_d(\mf{q}_d^{n, k})(\mf{v}_d^{n, k+1} - \mf{v}_d^{n, k}) 
    &= \frac{\Delta t}{N}\, \mf{k}_d(\mf{q}_d^n) 
    + \mf{J}_d^T \bgamma_{d}^{n,k}, \label{eq:discrete_mpm}
\end{align}
\[
\text{for } k=0,\ldots, N-1,\; \text{with } \mf{v}_d^{n,0}=\mf{v}_d^n,\; \text{and } \mf{v}_d^{n,N} = \mf{v}_d^{n+1}.
\]
Here, $\bgamma_{d}^{n,k} = \bgamma_{d}(\mf{v}_r^n, \mf{v}_d^{n, k+1})$ is the frictional impulse between rigid bodies and MPM at substep $k$. Notice that in Eq.~\eqref{eq:discrete_mpm}, we decompose a single time step into $N$ substeps. Elastic forces $\mf{k}_d$ are integrated explicitly to facilitate parallelization, whereas contact forces $\bgamma_{d}$ are integrated implicitly with respect to $\mf{v}_d^{n, k+1}$ for stability. Throughout all substeps in a single timestep, the rigid body velocity is fixed at $\mf{v}_r^n$. 
The contact impulses at each substep are accumulated and applied to rigid body DoFs in Eq.~\eqref{eq:discrete_rigid}. This approach results in a weak coupling between rigid bodies and MPM DoFs, in contrast to the strong coupling scheme proposed by~\cite{bib:zong2024convex}. We evaluate the accuracy of our weak coupling scheme in Section~\ref{sec:results}.

\subsection{Convex Formulation}\label{sec:convex}
The integration of rigid DoFs in Eq.~\eqref{eq:discrete_rigid} follows the same methodology as described in \cite{bib:castro2023theory}. We refer readers to that work for detailed explanations and  implementation practices. Here, we focus on the integration of Eq.~\eqref{eq:discrete_mpm} for MPM DoFs.
For notational simplicity, we drop the subscript $d$ and the superscript $n$ in Eq.~\eqref{eq:discrete_mpm}, reducing it to an algebraic difference equation that advances a substep of MPM:
\begin{equation}
    \mf{M}(\mf{v}^{k+1} - \mf{v}^k) = \frac{\Delta t}{N}\mf{k} + \mf{J}^T\bgamma(\mf{v}_c). \label{eq:discretet_mpm2}
\end{equation}
Here, $\mf{v}_c = \mf{J} \mf{v}^{k+1} + \mf{b}_r$ denotes the relative contact velocity between particles and the rigid body they are in contact with, expressed in the contact frame, where the contact frame is a local frame with the $z$-axis aligned with the contact normal. The term $\mf{b}_r = \mf{J}_r \mf{v}_r^n$ represents the bias velocity from the rigid body in the contact frame.

We solve Eq.~\eqref{eq:discretet_mpm2} in two stages by performing another time-splitting. First, we compute the \emph{free motion velocity}, $\mf{v}^*$, which is the velocity the MPM DoFs would attain in the absence of contact forces:
\begin{equation}
    \mf{M}(\mf{v}^{*} - \mf{v}^k) = \frac{\Delta t}{N}\mf{k}. \label{eq:free_motion}
\end{equation}
This step is equivalent to a standard explicit MPM step. We adopt the moving least-squares formulation described in \cite{bib:hu2018moving} for this computation.

In the second stage, we compute the \emph{post-contact velocity} $\mf{v}^{k+1}$ by solving:
\begin{equation}
    \mf{M}(\mf{v}^{k+1} - \mf{v}^*) = \mf{J}^T\bgamma(\mf{v}_c). \label{eq:contact_solve}
\end{equation}
To ensure global convergence, we reformulate Eq.~\eqref{eq:contact_solve} as a convex optimization problem \cite{bib:castro2023theory}:
\begin{equation}
    \mf{v}^{k+1} = \argmin_{\mf{v}} \ell_p = \argmin_{\mf{v}} \frac{1}{2}\|\mf{v} - \mf{v}^*\|_{\mf{M}}^2 + \ell_c(\mf{v}_c). \label{eq:optimization_problem}
\end{equation}
The term $\ell_c(\mf{v}_c)$ is the contact potential energy, defined such that $\bgamma(\mf{v}_c)=-\partial \ell_c/\partial \mf{v}_c$. With mass lumping, $\mf{M}$ is diagonal and positive definite, ensuring that problem \eqref{eq:optimization_problem} is strongly convex as long as the frictional contact model is designed with a convex $\ell_c$. We adopt the \emph{lagged contact model} in \cite{bib:castro2023theory}.
\subsection{Quasi-Newton Solver}\label{sec:jacobi}
A key advantage of our weak coupling scheme is that, instead of requiring the Schur complement of the Jacobian of the \emph{momentum residual} as in \cite{bib:zong2024convex}—which is unfriendly to GPU parallelization—our formulation replaces it with the diagonal mass matrix, which is trivial to parallelize. However, the Hessian $\mf{\hat{H}}$ of $\ell_p$ still contains off-diagonal entries from the Hessian of $\ell_c$. Given the massive parallelization capabilities of GPUs, it is more efficient to trade additional solver iterations for better parallelization (see Section \ref{sec:results-dough}). Therefore, we adopt a quasi-Newton strategy by approximating $\mf{\hat{H}}$ with its $3\times3$ block diagonal counterpart $\mf{H}$ to avoid the inherently serial Cholesky factorization of $\mf{\hat{H}}$. Algorithm~\ref{alg:sap} summarizes our quasi-Newton solver, leveraging the convergence criteria from \cite{bib:castro2022unconstrained} and \cite{bib:zong2024convex} to ensure robustness against large mass ratios and enable fair comparison in Section~\ref{sec:results-dough}. The resulting $\bgamma$ is accumulated into Eq.~\eqref{eq:discrete_rigid}.

\algblockdefx{RepeatUntil}{EndRepeatUntil}
{\textbf{repeat until}}{}
\algnotext{EndRepeatUntil}
\begin{algorithm}[t]
  \caption{Quasi-Newton solver for problem~\eqref{eq:optimization_problem}}
  \label{alg:sap}
  \begin{algorithmic}[1]
  \State Initialize $\mf{v} \gets \mf{v}^k$ \RepeatUntil
  $~\Vert\tilde{\nabla}\ell_p\Vert < \varepsilon_a + \varepsilon_r\max(\Vert\tilde{\mf{p}}\Vert,\Vert\tilde{\mf{j}_c}\Vert)$,
  \State $\Delta\mf{v} =
        -\mf{H}^{-1}(\mf{v})\nabla_\mf{v}\ell_p(\mf{v})$ \label{op:Newton_iteration} 
  \State $\displaystyle \alpha^* =
        \argmin_{\alpha > 0} \ell_p(\mf{v} + \alpha \Delta\mf{v})$ \label{op:line_search}
  \State $\displaystyle \mf{v} = \mf{v} +
       \alpha^*\Delta\mf{v}$
  \EndRepeatUntil 
  \State\Return $\{\mf{v}$,
       $\bgamma=\frac{\partial \ell_c}{\partial \mf{v}_c}\big|_{\mf{v}{_c (\mf{v})}}\}$
  \end{algorithmic}
\end{algorithm}
We conclude this section by noting that Algorithm~\ref{alg:sap} is globally convergent with at least a linear rate, as proved in the Appendix. In practice, the algorithm can be efficiently parallelized on the GPU, as detailed in Section~\ref{sec:quasi_newton_impl}.
\section{GPU IMPLEMENTATION}\label{sec:implementation}

\subsection{GPU Optimization for the Free Motion Solver}
\label{sec:gpu_optimization}

The explicit MPM time integration procedure for the \emph{free motion} solve in Eq.~\eqref{eq:free_motion} involves three main operations: (1) transferring particle attributes to the Eulerian grid, (2) updating grid values at each grid node, and (3) transferring updated grid values back to particles. For a detailed overview of the MPM algorithm, we refer readers to \cite{bib:jiang2016material}.

The key to high-performance MPM lies in optimizing particle-to-grid transfer operators, where we use one GPU thread to perform the computation for a single particle, and addressing write conflicts that occur during parallelization when multiple particles write to the same grid node is very important for performance. Conflicts within a GPU warp are particularly detrimental to performance. There are two known strategies to mitigate such conflicts:
\begin{itemize}
    \item Randomization: Proposed by \cite{bib:hu2019taichi, bib:wang2020massively}, this approach randomly shuffles particles. By introducing randomness into particle order, the chances of particles from different warps writing to the same grid location are reduced, as the large number of particles and grid nodes makes such conflicts statistically rare.
    \item Warp-Level Reduction: Introduced by \cite{bib:gao2018gpu, bib:fei2021principles}, this approach uses warp-level reduction before atomically writing data to grid nodes. It requires sorting particles into cells to maintain the order of reduction, offering more controlled conflict management.
\end{itemize}

We adopt the warp-level reduction strategy combined with a partial sorting approach. When particles are fully sorted such that particles transferring to the same grid nodes are consecutive and grouped together, reductions are first independently performed within each particle group. This operation is efficiently implemented using warp-level CUDA intrinsic functions \cite{bib:gao2018gpu}. The thread associated with the first particle in each group then writes the reduced result back to global memory. This ensures that each group performs only one atomic operation, significantly reducing atomic operation overhead.
This method is particularly effective for interactive-rate simulations with small problem sizes, where the GPU's computational throughput cannot be fully utilized, and memory access becomes the bottleneck. However, sorting at each substep is time-consuming due to kernel launches and implicit host-device synchronizations. To address this, we leverage the spatial continuity of simulations to reuse sorting results from the previous time step. In practice, sorting is only needed once per time step, allowing all substeps within that time step to reuse the sorted data, forming our partial sorting strategy. We use a radix sort with 10-bit keys instead of the full bit-width to enhance efficiency. While this approach may lead to suboptimal particle ordering and cause particles in the same cell being distributed across multiple warps, resulting in more atomic operations, the performance gains from reduced sorting time outweigh this drawback.

\subsection{Parallel Quasi-Newton Solver Implementation}
\label{sec:quasi_newton_impl}
At the beginning of each substep, we register a contact point for each particle that is inside a rigid geometry, creating one contact point per particle-geometry pair. The contact velocity at the $p$-th contact point is given by  
\[
\vf{v}_{c,p} = \mf{J}_p \mf{v}_d + \vf{b}_p.
\]
The first term represents the velocity of particle $p$ expressed in the contact frame, which can be efficiently computed in parallel using a grid-to-particle kernel. The second term, the bias velocity of the rigid body evaluated at the contact point, is constant throughout all substeps within a single time step, allowing it to be precomputed and reused across all substeps.

Similarly, the gradient and Hessian of the objective function in problem~\eqref{eq:optimization_problem} can be efficiently computed in parallel:
\begin{align}
    \nabla_\mf{v} \ell_p &= \mf{M}(\mf{v} - \mf{v}^*) + \mf{J}^T \frac{\partial \ell_c}{\partial \mf{v}_c} , \label{eq:gradient} \\
    \mf{\hat{H}} &= \mf{M} + \mf{J}^T \frac{\partial^2 \ell_c}{\partial \mf{v}_c^2} \mf{J} \label{eq:hessian}.
\end{align}

The first terms involving the mass matrix $\mf{M}$ is trivially parallelized because, with mass lumping, they reduce to a simple sweep over all grid nodes. For the second terms, we first compute the local 3-dimensional gradient vector $\frac{\partial \ell_c}{\partial \mf{v}_c}$ and the $3 \times 3$ Hessian matrix $\frac{\partial^2 \ell_c}{\partial \mf{v}_c^2}$ for each contact point. We then execute a single particle-to-grid kernel for all particles in contact to accumulate their contributions, leveraging the fact that $\mf{J}$ is the MPM transfer operator that maps grid velocity in the world frame to the particle velocities in the contact frame. By only keeping the diagonal terms when computing $\mf{\hat{H}}$, we obtain the approximate Hessian $\mf{H}$ used in step~\ref{op:Newton_iteration} of Algorithm~\ref{alg:sap}. Since $\mf{H}$ is block diagonal, the linear system in step~\ref{op:Newton_iteration} decouples into independent $3 \times 3$ subsystems, which can be efficiently solved in parallel using Cholesky factorizations. 
After obtaining the search direction, we perform the line search in step~\ref{op:line_search} using a one-dimensional Newton's method that falls back to bisection when convergence stalls, as described in \cite{bib:castro2022unconstrained}. Similar to the computations in Eq.~\eqref{eq:gradient} and Eq.~\eqref{eq:hessian}, the objective function and its gradient and Hessian required by the one-dimensional Newton solver are efficiently computed through a single particle-to-grid transfer for the $\ell_c$ term and a simple parallel reduction over the grid for the $\mf{M}$ term. With this strategy, the entire substep, including solving Eq.~\eqref{eq:free_motion} and Eq.~\eqref{eq:contact_solve}, is efficiently parallelized and entirely executed on the GPU, with our optimization practice in Section \ref{sec:gpu_optimization}.
\begin{figure}
  \centering
  \includegraphics[width=1.0\columnwidth]{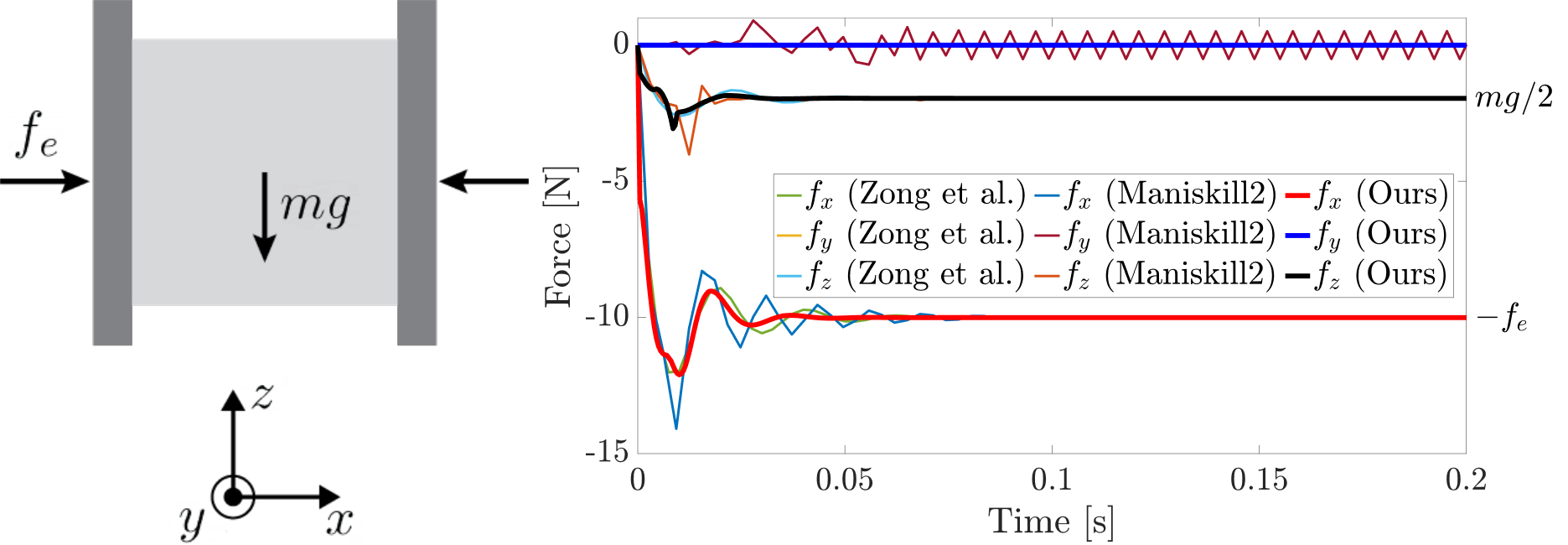}
\caption{(\textbf{Left}) Two rigid panels apply a normal force of $f_e = 10$~N in the $x$-direction to hold an elastic MPM box in place against gravity. (\textbf{Right}) The contact forces exerted by the elastic box on one of the rigid panels is compared using our method, ManiSkill2, and Zong et al. The corresponding analytical steady-state values are indicated on the right.}
\label{fig:gravity_comparison}
\end{figure}

\section{RESULTS}\label{sec:results}

We present several test cases to showcase the efficiency, accuracy, and robustness of our method. 
All simulations are run on a system with an Intel(R) Xeon(R) CPU E5-2690 v4 processor (56-core) and 128 GB of RAM, and an RTX 4090 with 24 GB device memory. We report scene statistics, including the Eulerian grid spacing $h$, MPM particles count, the time step size $\Delta t$, the number of substeps $N$, runtime performance, and number of contacts throughout the simulation in Table \ref{table:simulation}. All simulations are solved to convergence with $\varepsilon_r = 5\times10^{-2}$ and $\varepsilon_a$ set to machine epsilon unless otherwise specified.

\subsection{Comparison with Gu et al. \cite{bib:gu2023maniskill2} and Zong et al. \cite{bib:zong2024convex}}
\label{sec:results-force-plot}
We validate the accuracy of our convex formulation and parallel quasi-Newton solver in a comparison against the method adopted in ManiSkill2 \cite{bib:gu2023maniskill2}, a state-of-the-art embodied AI simulation environment with MPM support, and Zong et al. \cite{bib:zong2024convex}, which tightly couples MPM and rigid bodies using a convex formulation.

We replicate the experimental setup from \cite{bib:zong2024convex}, where an elastic cube is compressed by two rigid panels and held in place by friction, as shown in Fig.~\ref{fig:gravity_comparison}.
Our method is simulated with $\Delta t =0.1$ ms and $N=1$ substep, while Gu et al. uses $\Delta t = 0.01$ ms with 25 substeps for its explicit MPM integration, and Zong et al. employs $\Delta t = 10$ ms due to its fully implicit formulation.

The contact forces exerted by the cube on the left panel are shown in Fig.~\ref{fig:gravity_comparison}. The normal and $z$-direction friction forces converge to the analytical solutions after the initial transient period for all methods. Consistent with the observations in \cite{bib:zong2024convex}, the method by Gu et al. exhibits high-frequency oscillations in the $y$-direction friction force, as its explicit rigid-MPM coupling struggles with numerical stability in stiction.

Although our method also employs explicit integration for MPM elasticity forces, the contact coupling between rigid bodies and material points is as stable as in Zong et al., demonstrating a comparable level of accuracy to the fully implicit treatment. This stability stems from our convex formulation, which solves for contact implicitly. However, due to the weak coupling nature of our approach, our time step size is more limited compared to Zong et al., highlighting a limitation of our method. Despite this, our approach still achieves physically correct results, whereas explicit scheme by Gu et al. fails to produce the correct outcome, even with 10 times smaller time steps and 250 times more substeps.

To further stress test the robustness of our method, we replicate the challenging \emph{close-lift-shake} experiment from \cite{bib:zong2024convex}, where two rigid panels are controlled to grasp a rigid red cube positioned between two elastic cubes modeled with MPM (see Fig.~\ref{fig:shake}). This scenario serves as a stress test with pathological mass ratios using extreme parameters. The mass density of the MPM cube is $100~\text{kg/m}^3$, with a Young's modulus of $5 \times 10^5$ Pa and a Poisson's ratio of $0.4$, while the rigid cube has a mass density of $15000~\text{kg/m}^3$. Even in this extreme case, our method successfully completes the task with a time step size of $\Delta t = 0.1$ ms and a moderate tolerance of $\varepsilon_r = 10^{-2}$. Here, we increase the number of substeps to $N=10$ to maintain stability in the explicit MPM free motion solve, given the high material stiffness relative to its mass density.

\begin{figure}[b]
\centerline{\includegraphics[width=1.0\columnwidth]{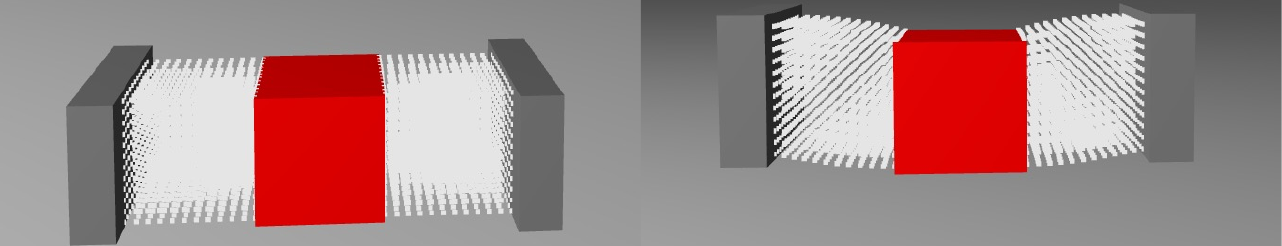}}
\caption{Our method successfully completes the stress test proposed in \cite{bib:zong2024convex}, which involves pathological mass ratios. The rigid panels securely hold a stack of three boxes with vastly different mass densities in place with friction, maintaining a stable grasp even under vigorous shaking.}
\label{fig:shake}
\end{figure}

\begin{figure}
\centerline{\includegraphics[width=1.0\columnwidth]{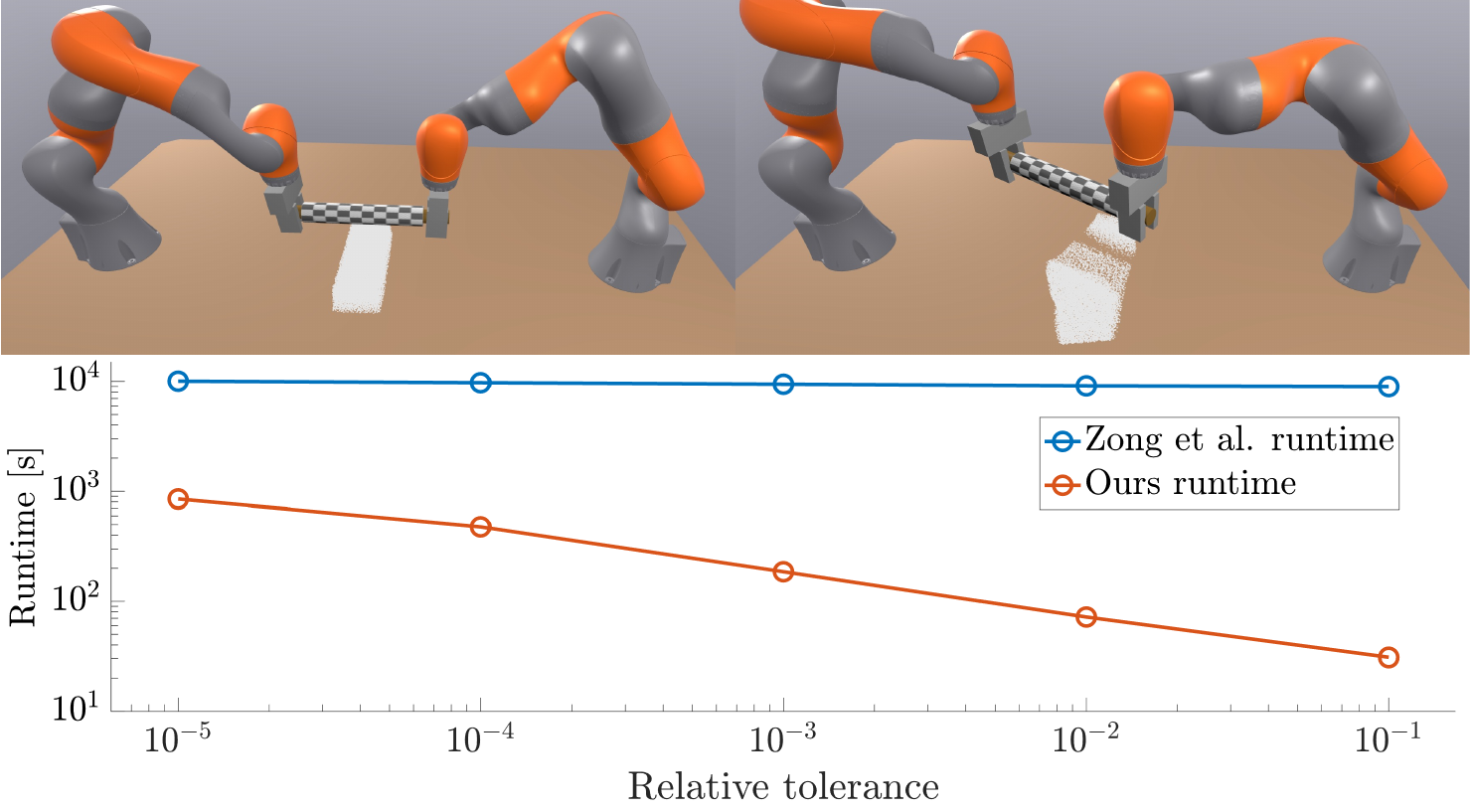}}
\caption{(\textbf{Top}) Flattening a piece of dough with a rolling pin. Our coupling scheme captures the rolling pin’s friction-driven rotation and the dough’s deformation. (\textbf{Bottom}) We compare the runtime between ours and Zong et al. \cite{bib:zong2024convex} under various relative tolerances.}
\label{fig:roll}
\end{figure}

\subsection{Rolling an Elastoplastic Dough}
\label{sec:results-dough}
We reproduce the complex dough rolling example from \cite{bib:zong2024convex} with our asynchronous time-splitting scheme, as shown in Fig.~\ref{fig:roll}. The friction coefficient is $1.0$ between the dough and the
rolling pin. The material parameters and prescribed motion trajectories are identical to those in \cite{bib:zong2024convex}. We refer readers to the supplemental video for the full task trajectory.

We compare the runtime performance of our method with \cite{bib:zong2024convex} under various tolerance criteria in Fig.~\ref{fig:roll}. 
For both methods, we set the absolute tolerance $\varepsilon_a$ in Algorithm~\ref{alg:sap} to machine epsilon and vary the relative tolerance $\varepsilon_r$ from $10^{-5}$ to $10^{-1}$. 
Both methods are simulated with a time step size $\Delta t=10$~ms, and our method uses $N=10$ substeps. Across all tolerance levels, our method consistently outperforms \cite{bib:zong2024convex}, achieving at least a 10× speed-up and demonstrating strong scalability as the convergence criteria are relaxed, making it well-suited for interactive-rate simulations. With  $\varepsilon_r = 10^{-1}$, our method attains a 500× speed-up compared to \cite{bib:zong2024convex}, reaching a 59\% real-time rate (defined as simulation time divided by wall-clock time). In practice, we observe that the simulation dynamics are insensitive to the relative tolerance $\varepsilon_r$. We provide a side-by-side comparison of the simulation runs with $\varepsilon_r = 10^{-1}$ and $\varepsilon_r = 10^{-5}$ in the supplementary video. Based on these observations, we generally use a loose relative tolerance of $\varepsilon_r = 5 \times 10^{-2}$ to maintain interactive-rate performance.

\begin{figure}[b]
\centerline{\includegraphics[width=1.0\columnwidth]{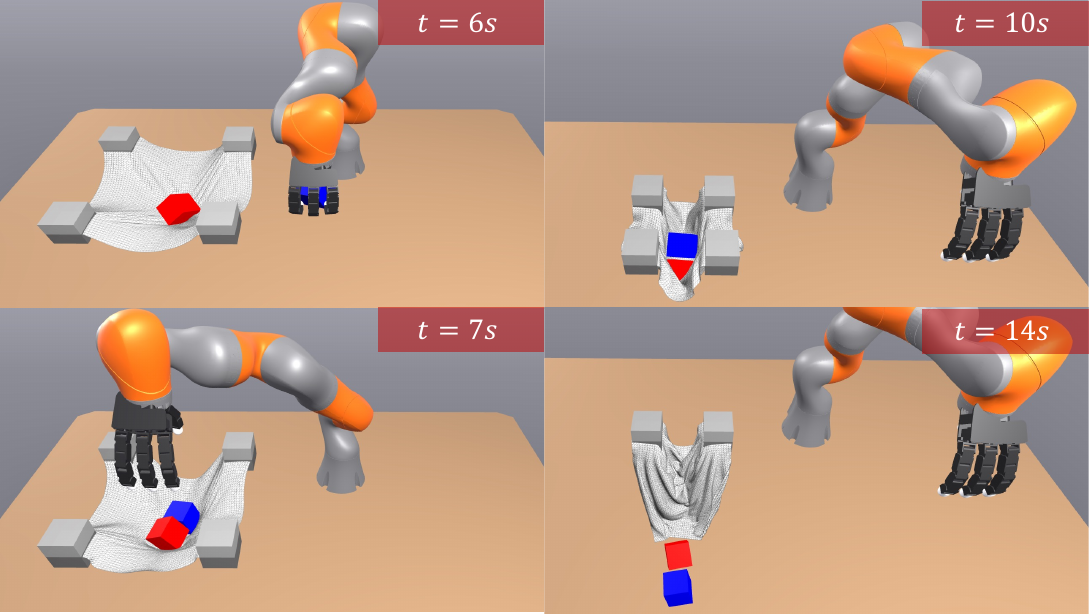}}
\caption{Pick and place rigid boxes into a deformable cradle.}
\label{fig:allegro_bagging}
\end{figure}

\subsection{Cloth and rigid bodies}
\label{sec:results-bagging}
Another key advantage of our weak coupling scheme is that it eliminates the requirement for a convex energy density in the constitutive model of MPM, as mandated by \cite{bib:zong2024convex} to maintain convexity in the optimization problem \eqref{eq:optimization_problem}. This flexibility unlocks the potential of MPM to model a broader range of materials.

In this experiment, we implement the method from \cite{bib:jiang2015affine} to model cloth with MPM. The frictional contact and self-collision of the cloth are automatically handled by the MPM grid. The simulation setup involves a $42~\text{cm}^2$ cloth modeled with a Young's modulus of $E_{\text{cloth}} = 3.2 \times 10^6$ Pa, a Poisson's ratio of $\nu = 0.4$, and a density of $\rho_\text{cloth} = 1.5 \times 10^3~\text{kg/m}^3$. The four corners of the cloth are fixed with boundary conditions to form a cradle. A KUKA LBR iiwa 7 arm, equipped with an anthropomorphic Allegro hand, picks up two rigid boxes and places them on the cloth. Each box has a side length of 8 cm and a density of $\rho_\text{box} = 10^3$~\text{kg/m}³. The friction coefficient is $0.2$ between the boxes and the cloth. The cloth corners are then moved inward to wrap around the rigid boxes, and finally, two corners are released, allowing the boxes to fall out. We refer readers to the supplemental video for the full trajectory.

\begin{figure}
\centerline{\includegraphics[width=1.0\columnwidth]{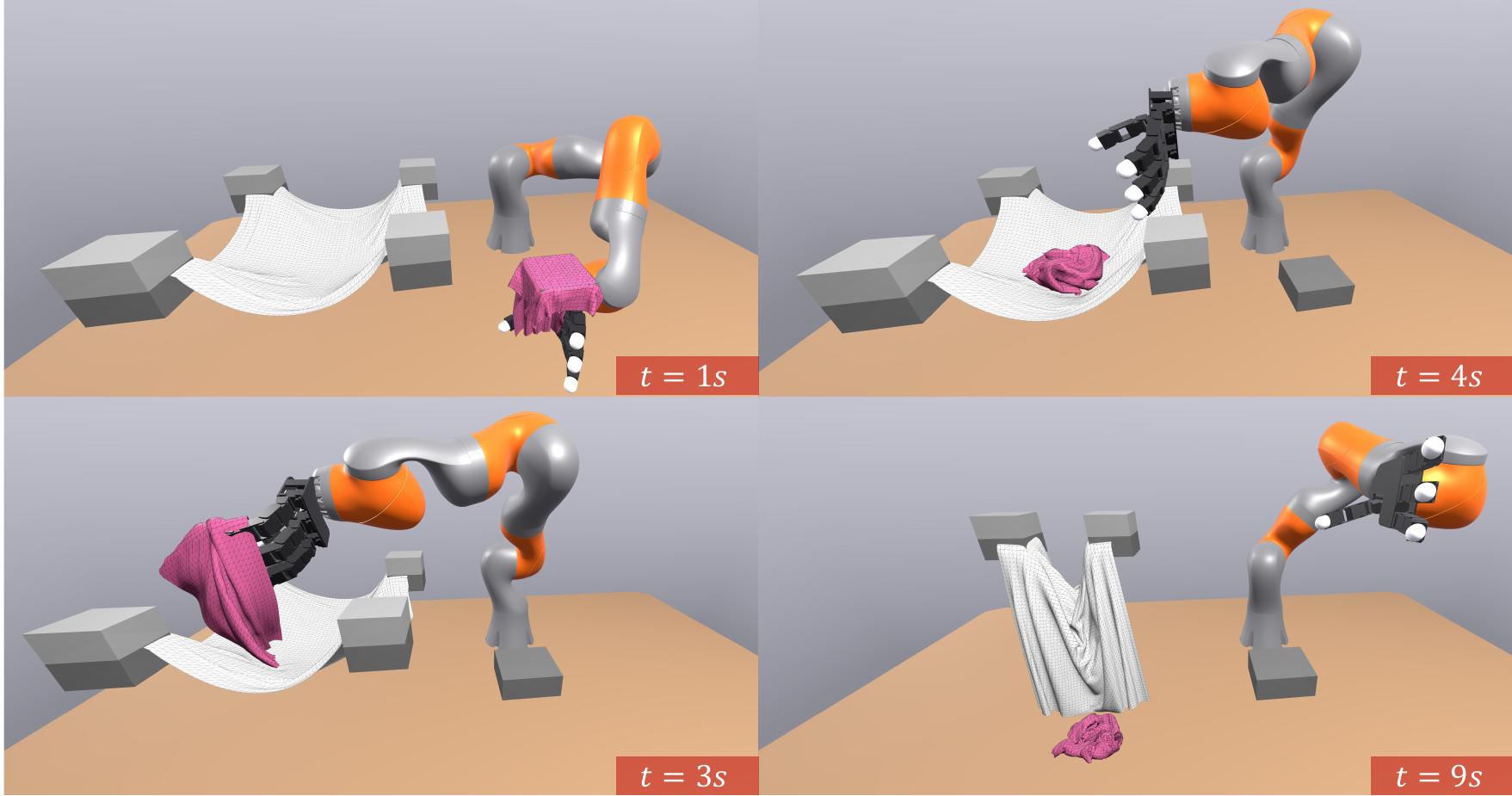}}
\caption{A robot picks up a piece of cloth and throws it in a ``laundry bag" . Our method resolves the complex cloth-on-cloth collisions and cloth self-collisions at interactive rate.}
\label{fig:allegro_bagging_cloth}
\end{figure}

\subsection{Laundry}
\label{sec:results-bagging-cloth}

Similar to the setup in Section~\ref{sec:results-bagging}, we replace the two boxes with another piece of cloth to mimic a common household scenario of moving clothes into a laundry bag. The robot follows a prescribed trajectory to grasp the cloth and throw it into the laundry bag.
We then release the grip on two corners of the laundry bag, allowing the cloth to fall out naturally. The simulation achieves a real-time rate of $26.5\%$, demonstrating that our method efficiently handles intense cloth-on-cloth collisions and self-collision scenarios.

\subsection{Folding and Unfolding a T-Shirt}
\label{sec:results-folding}

We demonstrate the accuracy and robustness of our method with a challenging T-shirt folding task (Fig.~\ref{fig:dual_arm_folding}). The T-shirt mesh consists of 4,171 vertices and 7,987 faces, and is simulated with a Young's modulus of $E_{\text{cloth}} = 10^5$~Pa, a Poisson's ratio of $\nu = 0.3$, and a density of $\rho_\text{cloth} = 10^3$~kg/m³. The robot setup features two PD-controlled KUKA LBR iiwa 7 arms equipped with custom parallel grippers. The friction coefficient between the grippers and the cloth is set as $0.6$. The simulation sequence proceeds as follows:
\begin{itemize} 
    \item The shirt free falls onto the table ($t = 0.5$ s).
    \item Both grippers perform the first fold along the long edge of the shirt($t = 3.0$ s), followed by a second fold along the short edge ($t = 3.8$ s, $t = 4.5$ s).
    \item One gripper grasps a corner of the shirt ($t = 8.3$ s), lifts it up vertically ($t = 10$ s), and unfolds it ($t = 11$ s).
    \item The grippers then fine-tune the shirt to fully flatten it.
\end{itemize}
For the full trajectory of this complex task, we refer readers to the supplemental video. Our method accurately captures frictional interactions between the gripper and the T-shirt, enabling smooth task execution. The T-shirt undergoes two folds, resulting in up to eight stacked layers. Our approach effectively handles cloth self-collisions and rigid-MPM interactions, preventing cloth self-penetration throughout the simulation. This ensures the T-shirt is successfully unfolded without artifacts, showing the robustness of our method.

\begin{table}
    \setlength{\tabcolsep}{2pt}
    \centering
    \fontsize{.8em}{.5em}\selectfont
    \caption{\textbf{Simulation statistics.} Runtime is measured as the simulation time in milliseconds per time step. Since the total number of DoFs in the system, $n_v$, varies as particles move through space, activating different grid nodes, we report the average $n_v$ over the entire simulation.
    We also provide the average and maximum number of contact points registered throughout the simulation as well as the total particle count. For the cloth simulation, we sample a particle at each mesh vertex and at the centroid of each triangle face, following the approach of \cite{bib:jiang2017anisotropic}.
    }
    \begin{tabular*}{\linewidth}{cccccccc}
        \toprule
        \multicolumn{1}{c}{Example} & $h\text{[m]}$ & \makecell{\# of\\Particles} &  $n_v$ & \makecell{$\Delta t$\\\text{[ms]}} & $N$ & \makecell{Runtime\\\text{[ms]}} & \makecell{Contacts \\Avg. (Max)}\\  \midrule
        Dough Rolling & 0.02 & 5,920 & 2,229.1 & 10 & 10 & $21.7$ & 17.8 (101) \\\midrule
        Box Bagging & 1/64 & 2,582 & 2,974.7 & 5 & 10 & $45.3$ & 19.6 (45) \\\midrule
        Laundry & 1/128 & 14,904 & 17,296.1 & 5 & 5 & $18.9$ & 163.4 (617)   \\\midrule
        T-shirt Folding & 1/128 & 12,160 & 8,299.6 & 2 & 4 & $21.5$ & 533.8 (1,307) \\\midrule
        Shake    & 0.01 & 3,456& 1,576.0 & 0.1 & 10 & $231.1$ & 571.1 (576)        \\\midrule
    \end{tabular*}
    \label{table:simulation}
\end{table}
\section{LIMITATIONS AND FUTURE WORK}
\textbf{Weak coupling:}
Our weak coupling approach enables broader material choices and efficient GPU parallelization, but it requires smaller time steps when there is a large mass ratio between rigid bodies and MPM bodies, as observed in Section~\ref{sec:results-force-plot}. Developing a strong coupling formulation that maintains parallelization efficiency for applications like high-frequency vibro-tactile feedback remains an active research direction for the authors.

\textbf{Explicit integration:} As a consequence of explicitly integrating the MPM free motion solve (Eq.~\eqref{eq:free_motion}), our method requires sufficient substeps to meet the stability criteria of explicit integration for MPM elastic forces. For example, in the stress test scenario described in Section~\ref{sec:results-force-plot}, we use substeps as small as $0.01$ ms to maintain stability. This contrasts with \cite{bib:zong2024convex}, where the scheme is unconditionally stable regardless of the time step size. However, we believe this trade-off of introducing an additional parameter to control substeps is justified by the significant performance gains, enabling interactive-rate simulations.

\textbf{Linear convergence:} As shown in Fig.~\ref{fig:roll}, our quasi-Newton solver exhibits linear convergence. Consequently, the speedup compared to \cite{bib:zong2024convex} is less pronounced when problem~\eqref{eq:optimization_problem} is solved to very tight tolerances, as \cite{bib:zong2024convex} uses Newton's method, which achieves quadratic convergence. However, as demonstrated in Section~\ref{sec:results-dough}, the dynamics of our simulation are largely insensitive to the tolerance $\varepsilon_r$, and we generally find that a loose tolerance of $\varepsilon_r = 5 \times 10^{-2}$ is sufficient for most manipulation tasks.

\textbf{Discrete contact detection:} Similar to \cite{bib:gu2023maniskill2} and \cite{bib:zong2024convex}, we employ discrete collision detection at each substep, where the rigid body's position is fixed at the beginning of the time step. This approach introduces the possibility that a rigid body may pass through a thin layer of MPM particles within a single time step without registering a contact. This effect is particularly pronounced in cloth simulations, where penetration can significantly alter subsequent dynamics. To mitigate this issue, we use a smaller time step size of 2 ms in Section~\ref{sec:results-folding}. Developing a more principled solution to this problem, such as incorporating continuous collision detection (CCD), remains an avenue for future work.

\appendix \label{sec:appendix}
We show that Algorithm~\ref{alg:sap} converges globally with at least linear rate. This follows immediately from the next lemma.

\begin{lemma}\label{lem:global-convergence}
Let $f : \mathbb{R}^n \to \mathbb{R}$ be $\mu$-strongly convex, and consider the quasi-Newton iterations with Hessian approximations $\mf{H}(\mf{v})$ performed using exact line search from an initial point $\mf{v}_0$. Define the sublevel set
\[
S(\mf{v}_0) = \{ \mf{v} \in \mathbb{R}^n : f(\mf{v}) \le f(\mf{v}_0) \}.
\]
Assume that for all $\mf{v} \in S(\mf{v}_0)$ the approximation $\mf{H}(\mf{v})$ is symmetric positive definite (SPD) with condition number bounded above by $\sigma$, and that $\nabla f$ is locally Lipschitz with Lipschitz constant $L$. Then the iterates converge to the unique minimizer $\mf{v}_*$ of $f$, with the error bound
\begin{equation}
    f(\mf{v}_m) - f(\mf{v}_*) \le \Bigl(1 - \frac{\mu}{\sigma^2 L}\Bigr)^m \Bigl[f(\mf{v}_0) - f(\mf{v}_*)\Bigr] \label{eq:linear_convergence_f}
\end{equation}
for all iterations $m \ge 0$. Furthermore,
\begin{equation}
    \|\mf{v}_m - \mf{v}_*\| \le C \rho^m \label{eq:linear_convergence_v}
\end{equation}
for some $C >0$ and $0 < \rho < 1$.
\end{lemma}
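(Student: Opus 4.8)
The plan is to establish a per-iteration contraction in the objective value and then convert it into the two stated bounds using strong convexity. First I would note that $\mu$-strong convexity makes $f$ coercive, so the sublevel set $S(\mf{v}_0)$ is compact, and it is convex because sublevel sets of convex functions are convex. Since exact line search guarantees $f(\mf{v}_{m+1}) \le f(\mf{v}_m)$, every iterate stays in $S(\mf{v}_0)$; hence the hypotheses in force there — that $\mf{H}(\mf{v})$ is SPD with condition number at most $\sigma$ and that $\nabla f$ is $L$-Lipschitz — apply at every step. Because $\mf{H}$ is SPD, the direction $\Delta\mf{v} = -\mf{H}^{-1}\nabla f$ is a genuine descent direction.

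The core estimate is a sufficient-decrease inequality. Writing $g = \nabla f(\mf{v}_m)$ and $d = \Delta\mf{v}$, the $L$-smoothness descent lemma gives $f(\mf{v}_m + \alpha d) \le f(\mf{v}_m) + \alpha\, g^{T} d + \tfrac{L\alpha^2}{2}\|d\|^2$. Minimizing the right-hand side over $\alpha$ at $\alpha_0 = -g^{T}d/(L\|d\|^2)$ and using that exact line search does at least as well yields
\[
f(\mf{v}_{m+1}) \le f(\mf{v}_m) - \frac{(g^{T} d)^2}{2L\,\|d\|^2}.
\]
Substituting $d = -\mf{H}^{-1}g$ turns the fraction into $(g^{T}\mf{H}^{-1}g)^2 / (g^{T}\mf{H}^{-2}g)$. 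Bounding this Rayleigh-type quotient with the extreme eigenvalues $\lambda_{\min}, \lambda_{\max}$ of $\mf{H}$ via $g^{T}\mf{H}^{-1}g \ge \|g\|^2/\lambda_{\max}$ and $g^{T}\mf{H}^{-2}g \le \|g\|^2/\lambda_{\min}^2$ gives a lower bound of $\|g\|^2 (\lambda_{\min}/\lambda_{\max})^2 \ge \|g\|^2/\sigma^2$, since the condition number is at most $\sigma$. Hence $f(\mf{v}_{m+1}) \le f(\mf{v}_m) - \|g\|^2/(2\sigma^2 L)$.

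Next I would invoke the Polyak--{\L}ojasiewicz inequality, which follows from $\mu$-strong convexity by minimizing the quadratic lower bound $f(\mf{v}_*) \ge f(\mf{v}_m) + \nabla f(\mf{v}_m)^{T}(\mf{v}_* - \mf{v}_m) + \tfrac{\mu}{2}\|\mf{v}_* - \mf{v}_m\|^2$, yielding $\|\nabla f(\mf{v}_m)\|^2 \ge 2\mu\,(f(\mf{v}_m) - f(\mf{v}_*))$. Combined with the sufficient-decrease bound this gives $f(\mf{v}_{m+1}) - f(\mf{v}_*) \le (1 - \mu/(\sigma^2 L))\,(f(\mf{v}_m) - f(\mf{v}_*))$, and since $\mu \le L$ and $\sigma \ge 1$ the contraction factor lies in $[0,1)$. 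Iterating this recursion produces \eqref{eq:linear_convergence_f}. For \eqref{eq:linear_convergence_v} I would use strong convexity once more in the form $\tfrac{\mu}{2}\|\mf{v}_m - \mf{v}_*\|^2 \le f(\mf{v}_m) - f(\mf{v}_*)$, so that $\|\mf{v}_m - \mf{v}_*\| \le C\rho^m$ with $\rho = \sqrt{1 - \mu/(\sigma^2 L)}$ and $C = \sqrt{2(f(\mf{v}_0) - f(\mf{v}_*))/\mu}$.

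The main obstacle is purely technical: the descent lemma requires $\nabla f$ to be $L$-Lipschitz along the entire segment $[\mf{v}_m,\, \mf{v}_m + \alpha_0 d]$, whereas Lipschitzness is only assumed on $S(\mf{v}_0)$. I would close this gap using convexity of the sublevel set: since $\mf{v}_m$ and the line-search point $\mf{v}_m + \alpha^* d$ both lie in $S(\mf{v}_0)$, so does the whole segment joining them. On that segment the one-dimensional estimate $\phi'(\alpha) \le g^{T}d + L\alpha\|d\|^2$ holds for $\phi(\alpha) = f(\mf{v}_m + \alpha d)$, and evaluating at the exact minimizer $\alpha^*$ where $\phi'(\alpha^*) = 0$ forces $\alpha_0 \le \alpha^*$; thus $\mf{v}_m + \alpha_0 d$ indeed lies on the admissible segment and the sufficient-decrease step is justified. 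The only remaining care is the degenerate case $g = 0$, where $\mf{v}_m$ is already optimal and both bounds hold trivially.
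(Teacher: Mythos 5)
Your proof is correct, and at its mathematical core it follows the same route as the paper's: sufficient decrease under exact line search combined with the Polyak (PL) inequality, then strong convexity to convert the objective-gap bound into the iterate bound. The difference is one of completeness. The paper proves \eqref{eq:linear_convergence_f} purely by citation, deferring to Appendix E of \cite{bib:castro2022unconstrained}, and only writes out the final strong-convexity step; you instead derive the contraction factor from scratch, chaining the descent-lemma estimate $f(\mf{v}_{m+1}) \le f(\mf{v}_m) - (g^T d)^2/(2L\|d\|^2)$, the eigenvalue bound $(g^T \mf{H}^{-1} g)^2/(g^T \mf{H}^{-2} g) \ge \|g\|^2/\sigma^2$, and the PL inequality $\|\nabla f\|^2 \ge 2\mu\,(f - f_*)$ to land exactly on the stated rate $1 - \mu/(\sigma^2 L)$. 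Two further points are in your favor. First, you handle a subtlety the paper (and many textbook treatments) glosses over: since the Lipschitz bound is assumed only on $S(\mf{v}_0)$, the descent lemma cannot be applied blindly along the search ray; your argument that $\alpha_0 \le \alpha^*$ (via $\phi'(\alpha^*) = 0$ and the one-dimensional Lipschitz estimate) together with convexity of the sublevel set keeps the trial point inside $S(\mf{v}_0)$, which is precisely what is needed. Second, your form of the strong-convexity inequality, $\tfrac{\mu}{2}\|\mf{v}_m - \mf{v}_*\|^2 \le f(\mf{v}_m) - f(\mf{v}_*)$, is the correct one — the paper's proof states it with the constant inverted (it writes $\|\mf{v}_m - \mf{v}_*\|^2 \le \tfrac{\mu}{2}\bigl[f(\mf{v}_m) - f(\mf{v}_*)\bigr]$, which should read $\tfrac{2}{\mu}$), a typo your derivation implicitly fixes while also supplying the explicit constants $C = \sqrt{2\bigl(f(\mf{v}_0) - f(\mf{v}_*)\bigr)/\mu}$ and $\rho = \sqrt{1 - \mu/(\sigma^2 L)}$ that the paper leaves unspecified.
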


\begin{proof}
The proof for \eqref{eq:linear_convergence_f} can be found in Appendix E of \cite{bib:castro2022unconstrained}, where the result is derived using the Polyak inequality. Using $\mu$-strong convexity of $f$ around $\mf{v}_*$, we get 
\[\|\mf{v}_m - \mf{v}_*\|^2 \le \frac{\mu}{2}\Bigl[f(\mf{v}_m) - f(\mf{v}_*)\Bigr].\]
Taking square roots and applying \eqref{eq:linear_convergence_f} completes the proof.
\end{proof}
\begin{theorem}
Algorithm~\ref{alg:sap} is globally convergent with at least a linear rate.
\end{theorem}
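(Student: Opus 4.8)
The plan is to deduce the theorem directly from Lemma~\ref{lem:global-convergence} by verifying that Algorithm~\ref{alg:sap}, applied to the optimization problem~\eqref{eq:optimization_problem}, satisfies every hypothesis of the lemma. The theorem is essentially a corollary: once the objective $\ell_p$ and the quasi-Newton Hessian approximation $\mf{H}$ are shown to meet the lemma's requirements, the linear convergence bounds~\eqref{eq:linear_convergence_f} and~\eqref{eq:linear_convergence_v} apply verbatim, which is exactly the claimed global linear convergence.

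The key steps, in order, are as follows. First I would identify $f = \ell_p$ and confirm strong convexity: as noted right after~\eqref{eq:optimization_problem}, mass lumping makes $\mf{M}$ diagonal and positive definite, so the quadratic term $\tfrac{1}{2}\|\mf{v}-\mf{v}^*\|_{\mf{M}}^2$ contributes a curvature bounded below by $\mu = \lambda_{\min}(\mf{M}) > 0$; combined with the convex contact potential $\ell_c$ (guaranteed by the lagged contact model of \cite{bib:castro2023theory}), this yields $\mu$-strong convexity of $\ell_p$. Second, I would verify that the block-diagonal approximation $\mf{H}$ used in step~\ref{op:Newton_iteration} is symmetric positive definite with a uniformly bounded condition number on the sublevel set $S(\mf{v}_0)$: $\mf{H}$ is obtained from $\mf{\hat{H}} = \mf{M} + \mf{J}^T (\partial^2\ell_c/\partial\mf{v}_c^2)\,\mf{J}$ by retaining only the $3\times3$ diagonal blocks. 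Since $\mf{M}$ contributes strictly positive diagonal entries and the contact Hessian $\partial^2\ell_c/\partial\mf{v}_c^2$ is positive semidefinite (convexity of $\ell_c$), each $3\times3$ block is SPD, hence $\mf{H}$ is SPD. A uniform condition-number bound $\sigma$ follows from the mass matrix providing a positive lower bound on the eigenvalues and from the boundedness of the contact Hessian over the compact sublevel set. Third, I would establish local Lipschitz continuity of $\nabla\ell_p$ with constant $L$, which reduces to Lipschitz continuity of $\nabla\ell_c$ on $S(\mf{v}_0)$; strong convexity makes $S(\mf{v}_0)$ compact, so the $C^1$ (indeed twice-differentiable) contact potential has a bounded Hessian there, giving the required $L$. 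With these three facts in hand, I would invoke Lemma~\ref{lem:global-convergence} to conclude.

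The main obstacle I anticipate is the uniform condition-number bound $\sigma$ on the approximate Hessian $\mf{H}$ over the whole sublevel set, rather than the strong-convexity or Lipschitz verifications, which are comparatively routine. The subtlety is twofold: one must ensure that \emph{every} $3\times3$ block of $\mf{H}$ is nondegenerate even at grid nodes where no contact is active (here the mass term alone saves the day, since mass lumping guarantees a strictly positive mass on every active grid node carrying MPM degrees of freedom), and one must bound the largest eigenvalue uniformly, which requires that the contact Hessian $\partial^2\ell_c/\partial\mf{v}_c^2$ stay bounded as $\mf{v}$ ranges over $S(\mf{v}_0)$. The latter is where a careful argument using compactness of the sublevel set (a consequence of strong convexity and coercivity of $\ell_p$) together with the smoothness of the lagged contact model is needed; I would lean on the corresponding regularity properties already established for this contact model in \cite{bib:castro2023theory} and \cite{bib:castro2022unconstrained} to supply the bound without re-deriving it.
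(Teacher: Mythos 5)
Your proposal is correct and follows essentially the same route as the paper: both deduce the theorem as a corollary of Lemma~\ref{lem:global-convergence} by checking that $\ell_p$ is strongly convex (via the SPD lumped mass matrix plus convexity of $\ell_c$), that the block-diagonal $\mf{H}$ is SPD, and that $\nabla\ell_p$ is locally Lipschitz. In fact your treatment is more careful than the paper's own proof, which never explicitly verifies the uniform condition-number bound $\sigma$ on the sublevel set and justifies the Lipschitz hypothesis only by appeal to continuous differentiability of $\ell_p$, whereas you correctly supply both via compactness of $S(\mf{v}_0)$ and boundedness of the contact Hessian.
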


\begin{proof}
Recall the objective function in problem \eqref{eq:optimization_problem}
\[
\ell_p(\mf{v}) = \frac{1}{2}\|\mf{v} - \mf{v}^*\|_{\mf{M}}^2 + \ell_c(\mf{v}_c).
\]
$\nabla \ell_p$ is locally Lipschitz since $\ell_p$ is continuously differentiable. In addition,
Since $\mf{M}$ is diagonal and SPD and $\ell_c$ is convex, the Hessian $\mf{\hat{H}}$ of $\ell_p$ is SPD. Therefore, its block-diagonal approximation $\mf{H}$ is also SPD. Thus, by applying Lemma~\ref{lem:global-convergence}, the result follows.
\end{proof}

\bibliographystyle{IEEEtran}
\bibliography{root}
\end{document}